\documentclass[9pt]{extarticle}

\usepackage{spconf,amsmath,graphicx}
\usepackage{algorithm}
\usepackage{algpseudocode}
\usepackage{amssymb}
\usepackage{amsthm}
\usepackage{xcolor}
\usepackage{subcaption}
\usepackage{enumitem}
\usepackage{hyperref}
\newtheorem{theorem}{Theorem}
\newtheorem{lemma}{Lemma}

\newtheorem{assumption}{Assumption}
\makeatother
\algrenewcommand\algorithmicrequire{\textbf{In:}}
\algrenewcommand\algorithmicensure{\textbf{Out:}}
\algrenewcommand\algorithmicindent{0.8em}


\newcommand{\ones}{\mathbf{1}}
\allowdisplaybreaks

\hyphenation{op-tical net-works semi-conduc-tor}

\begin{document}

\title{\textbf{FedAVOT}: Exact Distribution Alignment in \\Federated Learning via Masked Optimal Transport\vspace{-8pt}}
\markboth{Journal of \LaTeX\ Class Files,~Vol.~14, No.~8, August~2015}%
{Shell \MakeLowercase{\textit{et al.}}: Bare Demo of IEEEtran.cls for IEEE Journals}
\maketitle

\begin{abstract}
\vspace{-3pt}
Federated Learning (FL) allows distributed model training without sharing raw data, but suffers when client participation is partial. In practice, the distribution of available users (\emph{availability distribution} $q$) rarely aligns with the distribution defining the optimization objective (\emph{importance distribution} $p$), leading to biased and unstable updates under classical FedAvg. We propose \textbf{Fereated AVerage with Optimal Transport (\textbf{FedAVOT})}, which formulates aggregation as a masked optimal transport problem aligning $q$ and $p$. Using Sinkhorn scaling, \textbf{FedAVOT} computes transport-based aggregation weights with provable convergence guarantees. \textbf{FedAVOT} achieves a standard $\mathcal{O}(1/\sqrt{T})$ rate under a nonsmooth convex FL setting, independent of the number of participating users per round. Our experiments confirm drastically improved performance compared to FedAvg across heterogeneous, fairness-sensitive, and low-availability regimes, even when only two clients participate per round.
\end{abstract}
\vspace{-4pt}
\begin{keywords}
Federated Learning, Optimal Transport, Partial Participation, Convergence, Fairness.
\end{keywords}
%
\vspace{-9pt}
\section{Introduction and Problem Setup}
\label{sec:intro}
\vspace{-7pt}

Federated Learning (FL) has emerged as a decentralized paradigm for training machine learning models across multiple clients without requiring direct access to their raw data~\cite{mcmahan2017communication, kairouz2021advances}. In this framework, each client computes local updates on its private dataset and communicates only model parameters or gradients to a central server, which then aggregates these updates to form a global model. This design ensures privacy preservation and compliance with data protection regulations, while enabling large-scale collaboration across data silos. Consequently, FL has been widely applied in privacy-sensitive domains such as healthcare, finance, and personalized recommendation systems~\cite{yang2019federated, bonawitz2019towards}.

Despite these advantages, the deployment of FL in practice is hindered by several challenges. First, clients may have intermittent connectivity, variable availability, or limited computational resources~\cite{bonawitz2019towards}. Second, data across clients is rarely independent and identically distributed (IID), but instead exhibits strong heterogeneity, leading to significant optimization and generalization difficulties~\cite{li2020federated, zhao2018federated}. Third, the number of active clients per communication round is often severely restricted, either due to network limitations or user participation constraints. These limitations fundamentally alter the optimization dynamics relative to centralized training.

A critical but underexplored issue arises when distinguishing between two distinct distributions in FL:  
(i) the \emph{availability distribution}, which governs how often each user participates in training, and  
(ii) the \emph{importance distribution}, which characterizes the relative contribution of each user's data to the global optimization objective.  
Standard algorithms such as FedAvg implicitly assume these distributions are aligned, or more restrictively, that user data should be weighted uniformly~\cite{mcmahan2017communication, kairouz2021advances}. In practice, however, this assumption rarely holds: users with high availability may possess uninformative or redundant data, while infrequent participants may hold data that is disproportionately important for the global model~\cite{chen2020optimal, mohri2019agnostic, li2019fair, li2020federated, rahimi2025agnosticfedavg, dio2024restricted, li2020convergence}. This misalignment is further exacerbated by data heterogeneity~\cite{zhao2018federated, karimireddy2020scaffold}, skewed participation~\cite{wang2020tackling, cho2022towards}, and fairness considerations~\cite{li2019fair, du2021fairness}, all of which can significantly impact both convergence guarantees and model performance. Neglecting this discrepancy can therefore lead to systematic bias, instability, and degraded performance in partial participation regimes\cite{rizk2021importance, ribero2022intermittent, wang2022fedgs}.


\textbf{\textit{Local and Global Objectives.}}  
Formally, let the input space be \( \mathfrak{X} \subset \mathbb{R}^d \) and the label space be \( \mathcal{L} = \{1, \dots, L\} \). Each client \( i \in [N] \) holds a local dataset 
$
D_i = \{(X_i^j, Y_i^j)\}_{j=1}^{n_i},(X_i^j, Y_i^j) \sim \mathcal{D}_i,
$
and minimizes its empirical risk
$
\label{equation:local-objective}
    f(\theta; D_i) = \frac{1}{n_i} \sum_{j=1}^{n_i} \ell(m(X_i^j; \theta), Y_i^j),
$
where \( \theta \in \Theta \subseteq \mathbb{R}^{d'} \) are the model parameters, \( m: \mathfrak{X} \times \Theta \to \mathcal{L} \) is the predictor, and \( \ell: \mathcal{L}\times \mathcal{L} \to \mathbb{R}_+ \) is a standard loss function (e.g., cross-entropy). Clients typically optimize the aforementioned local objetives via stochastic gradient descent (SGD) and, if available for communication, transmit their updated parameters to the server. The global optimization problem is then
\begin{equation}
    \label{equation:global-objective}
    F(\theta) := \sum_{i=1}^N p_i f_i(\theta),
\end{equation}
where $f_i(\cdot) := f(\cdot; D_i)$ and $(p_i)_{i=1}^N$ is a user-weighting distribution that we call the \emph{importance distribution}. This distribution may encode fairness criteria~\cite{li2019fair, mohri2019agnostic}, robustness to minority populations, or business-driven objectives.

\textbf{\textit{Availability vs. Importance Distributions.}}  
In practice, clients do not always participate. At each round $t$, a random subset $S^t \subseteq [N]$ of clients becomes available. We model this by a distribution $q$, where $q(\mathcal{A})$ is the probability of observing client set $\mathcal{A} \subseteq [N]$. 
Without loss of generality, we assume that $q$ is supported on $\{\mathcal{A}_j\}_{j\in[M]}$, for $M<2^N$.
Thus, optimization dynamics are governed by $q$, not $p$. 
This distinction between the \emph{availability distribution} $q$ and the \emph{importance distribution} $p$ (defined above) has been largely neglected in the literature, despite its centrality to fairness and robustness in FL.

The server aggregates parameters from available clients via some aggregation rule to update the global model $\hat{\theta}^t$. FedAvg~\cite{mcmahan2017communication} performs
$    \hat{\theta}^t = {1}/{|S^t|}\sum_{i \in S^t} \theta_i^t,
$
which is itself equivalent to optimizing a \textit{surrogate problem} \cite{rahimi2025agnosticfedavg}
\begin{equation}
    \tilde{F}(\theta) = \sum_{i=1}^N \tilde{p}_i f_i(\theta), \quad \text{where}\quad
    \tilde{p}_i = \sum_{\mathcal{A} \ni i} \frac{q(\mathcal{A})}{|\mathcal{A}|}. \label{eq:importance-implied-by-availability}
\end{equation}
Hence, \textit{unless $q$ aligns with $p$} (e.g., uniform participation and importance), FedAvg converges to a minimizer (or a stationary point) of $\tilde{F}$ rather than $F$ in~\eqref{equation:global-objective}~\cite{li2020federated}. Moreover, heuristic aggregation rules, such as the commonly used weighting $(N/|S^t|)p_i$, fail to guarantee convergence in general non-uniform partial participation regimes\cite{condat2025stochasticmultiproximalmethodnonsmooth}.

More specifically, it is well-known that if all devices participate in every round ($q([N])=1$), then any target distribution $p$ can be achieved exactly via the weighted update rule
$    \hat{\theta}^t = \sum_{i=1}^N p_i \theta_i^{t-1}$.
However, under partial participation, achieving convergence to $F(\theta)$ in~\eqref{equation:global-objective} requires an aggregation policy that systematically accounts for both $q$ and $p$. The central question we consider is therefore:
\vspace{-3pt}
\begin{quote}
\centering
\emph{\textbf{Can we adapt client aggregation so that optimization governed by the availability distribution $q$ converges according to the target distribution $p$~(cf. \eqref{equation:global-objective})?}}
\end{quote}
\vspace{-3pt}
In this paper, we develop a novel algorithm called \textbf{Federated Averaging with Optimal Transport (\textbf{FedAVOT})}, introducing a new aggregation paradigm for FL that \textit{aligns} the availability distribution $q$ (governing which users participate in aggregation) with the importance distribution $p$ (defining the optimization   objective in \eqref{equation:global-objective}) \textit{via a masked optimal transport construction}. This perspective exposes structural gaps in existing FL formulations and leads to both theoretical and practical advances. Our key contributions are as follows: \vspace{-12pt}\setlist[enumerate]{leftmargin=1.4em}
\begin{enumerate}
    
    \item \textit{\textbf{Feasibility via Flow Duality.}} We reduce the feasibility of \textit{exact} distribution alignment (i.e., existence of a \textit{feasible} transport map) to a \emph{max-flow/min-cut problem} on a bipartite graph, and establish \textit{necessary and sufficient conditions in the form of Hall-type inequalities}~\cite{hall1935representatives, ford1956max, fulkerson1962flows}. To the best of our knowledge, this is the first work to identify such a precise combinatorial characterization in the context of FL aggregation to address distribution shift. 
    
    \vspace{-4pt}    
    \item \textit{\textbf{Algorithm and Convergence.}} We introduce the \textbf{FedAVOT} algorithm, which modifies the aggregation step using feasible transport plans, and establish convergence of standard order $\mathcal{O}(1/\sqrt{T})$ under a relaxed convex setting, matching the best-known (and optimal) rates for FL~\cite{li2020federated}. Crucially, these guarantees hold under the exact feasibility conditions discussed above. \vspace{-4pt}
    
    \item \textit{\textbf{Independence from Aggregation Size.}} Quite surprisingly (at least at first), we show both theoretically and empirically that \textbf{FedAVOT} achieves \textit{the same convergence rate regardless of the number of available clients in each round}. The same bound holds even when aggregation is performed with as few as \emph{two users per round}, a regime where standard (weighted) FedAvg fails~\cite{bonawitz2019towards, zhao2018federated}. \vspace{-4pt}
    
    \item \textit{\textbf{Empirical Validation.}} We validate \textbf{FedAVOT} on diverse tasks, including coordinated sampling (i.e., server-controlled), fairness-aware FL~\cite{mohri2019agnostic}, and restricted availability settings~\cite{chen2020optimal}. Across all scenarios, \textbf{FedAVOT} consistently improves upon FedAvg in the partial participation setting in both stability and final accuracy and achieves the (or near-) same performance of full device participation with an aggregation size of just two users per round. \vspace{-4pt}
\end{enumerate}

\vspace{-14pt}
\section{FedAVOT}
\label{sec:FedAVOT}
\vspace{-7pt}

A fundamental challenge in FL under partial participation is reconciling the \emph{availability distribution} $q$---which governs how frequently clients participate in training---with the \emph{importance distribution} $p$ that defines the global optimization objective~\cite{mcmahan2017communication, li2020federated, kairouz2021advances}. Classical FedAvg implicitly assumes that each user is on average available to the server proportional to its importance\footnote{As shown in our previous paper \cite{} this means that either uniform importance and availability is assumed, or $\tilde{p} = p$ where $\tilde{p}$ is the marginal of form \eqref{eq:importance-implied-by-availability} of $q$.} or that $q$ is uniform, leading to updates that converge towards a biased surrogate objective~\cite{mcmahan2017communication, mohri2019agnostic}. To address this mismatch, we propose \textbf{Federated AVerage with Optimal Transport (\textbf{FedAVOT})}, which formulates the aggregation step as a constrained optimal transport (OT) problem aligning $q$ and $p$.

At each communication round $t$, instead of assigning uniform weights $1/{|S^t|}$ to available clients, we would like to assign weights proportional to their target importance probabilities $p_i$. Let $Y[i,j]$ be the normalized contribution of client $i$ (in aggregation) when the active set of clients is $\mathcal{A}_j(=S^t)$, and define $T[i,j]=q_j Y[i,j]$ \textit{as the joint allocation of mass from event $j$ to client $i$}. Hence, we have implicitly defined a mask on $Y$ such that users that have not participated in a communication round should \textit{not} be assigned a weight. By construction, $T$ must satisfy the marginal and feasibility constraints: 
\begin{align}\label{equation: constraints}
    &\text{Row sums: } \sum_{j} T[i,j] = p_i, && \forall i\in [N], \nonumber\\
    &\text{Column sums: } \sum_{i} T[i,j] = q_j, && \forall j\in [M], \nonumber\\
    &\text{Masking: } T[i,j]=0, &&  i\notin \mathcal{A}_j, \nonumber\\
    &\text{Nonnegativity: } T[i,j]\ge0, && \forall (i,j). \tag{C1-C4}
\end{align}
Equivalently, we may write
\begin{align}
    T \mathbf{1}_M = p, \quad \mathbf{1}_N^\top T = q, \quad T \ge 0, \tag{MOT}\label{MOT}
\end{align}
with support restricted to the \textit{mask} $\mathcal{E}=\{(i,j): i\in \mathcal{A}_j\}$.

This construction reduces client aggregation to a \emph{masked optimal transport} feasibility problem: transporting mass $q$ over subsets $\mathcal{A}_j$ to mass $p$ over users, under feasibility restrictions. Unlike classical OT where the cost of transport matters, here we just need to find one (out of possibly many) transport map that respects the marginals and conditions and we are not trying to optimize a cost function~\cite{villani2008optimal, peyre2019computational}. Therefore, we can go for a trivial (in)feasiblity indicator cost function over the mask constraints 
in $\mathcal{E}$.
Defining the cost matrix $C \in \left(\mathbb{R}\cup\{\infty\}\right)^{N\times M}$ as $C[i,j]=1$ if $i\in\mathcal{A}_j$ and $C[i,j]=\infty$ if not,
\eqref{MOT} is equivalent to the \textit{Kantorovich relaxation}~\cite{kantorovich1942transfer, villani2008optimal}:
\begin{align}
    \min_{T \ge 0} \ \sum_{i=1}^N\sum_{j=1}^M C[i,j]T[i,j] 
    \quad \text{s.t. } T\mathbf{1}_M = p,\ \mathbf{1}_N^\top T = q. \label{eq:kantorovich}
\end{align}
Remarkably, existence of a feasible masked transport map is fully characterized by a max-flow/min-cut argument. As the next result suggests, feasibility of \eqref{eq:kantorovich} reduces to a subset of inequalities, a direct generalization of Hall’s condition~\cite{hall1935representatives} and the Ford–Fulkerson theorem~\cite{fulkerson1962flows}. 
This reduction also underscores the key distinction from  standard OT: instead of a full-blown geometric optimization problem, \eqref{MOT} may be seen as a combinatorial feasibility problem governed by flow-cut duality.
\vspace{-2pt}
\begin{theorem}[\textbf{Feasibility of MOT} \cite{villani2009optimal,peyre2019computational}]
\label{thm:feasibility}
For $I\subseteq[N]$, let $\mathcal{N}(I) := \{j\in [M]: \exists i\in I \ \text{with}\ (i,j)\in \mathcal{E}\}$ and $\mathcal{S}(I):=\{j\in [M]: \mathcal{A}_j \subseteq I\}$. Then \eqref{MOT} is feasible if and only if
\begin{equation}\label{equation-feasibility}\tag{Feasibility}
    \sum_{j\in \mathcal{S}(I)} q_j \;\leq\; \sum_{i\in I} p_i \;\leq\; \sum_{j\in \mathcal{N}(I)} q_j, \quad \forall I\subseteq [N].
\end{equation}
\end{theorem}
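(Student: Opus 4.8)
The plan is to read \eqref{MOT} as a transportation feasibility problem on a bipartite network and invoke max-flow/min-cut duality, separating the argument into an easy necessity direction (direct double counting) and a sufficiency direction that carries the combinatorial weight. I would first build the capacitated digraph: a source $s$, one node per availability event $j\in[M]$, one node per client $i\in[N]$, and a sink $t$; put capacity $q_j$ on each arc $s\to j$, capacity $p_i$ on each arc $i\to t$, and capacity $+\infty$ on each arc $j\to i$ whenever $(i,j)\in\mathcal{E}$ (i.e. $i\in\mathcal{A}_j$). Since $\sum_j q_j=\sum_i p_i=1$, a feasible $T$ is exactly an $s$--$t$ flow of value $1$, with its middle-layer arc values realizing the entries $T[i,j]$; saturating every source arc forces the column sums $\mathbf{1}_N^\top T=q$ and saturating every sink arc forces the row sums $T\mathbf{1}_M=p$. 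Hence \eqref{MOT} is feasible if and only if the maximum flow equals $1$.

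For necessity I would argue directly from any feasible $T$. For the right-hand inequality, each client $i\in I$ draws mass only from events adjacent to it, so $p_i=\sum_{j\in\mathcal{N}(I)}T[i,j]$; summing over $i\in I$ and using $\mathbf{1}_N^\top T=q$ gives $\sum_{i\in I}p_i\le\sum_{j\in\mathcal{N}(I)}q_j$. For the left-hand inequality, each event $j\in\mathcal{S}(I)$ has $\mathcal{A}_j\subseteq I$, so all of its mass $q_j$ is delivered inside $I$; summing these and comparing with the row sums $T\mathbf{1}_M=p$ restricted to $I$ yields $\sum_{j\in\mathcal{S}(I)}q_j\le\sum_{i\in I}p_i$. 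Both are one-line counting arguments.

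For sufficiency I would use max-flow/min-cut: the max flow equals $1$ iff every $s$--$t$ cut has capacity at least $1$. I would parametrize a finite-capacity cut by the set $I_0\subseteq[N]$ of clients placed on the source side together with a set $J_0\subseteq[M]$ of events on the source side; finiteness forbids any crossing $\infty$-arc, which forces $\mathcal{A}_j\subseteq I_0$ for every $j\in J_0$, i.e. $J_0\subseteq\mathcal{S}(I_0)$. The cut capacity then equals $\sum_{j\notin J_0}q_j+\sum_{i\in I_0}p_i=1-\sum_{j\in J_0}q_j+\sum_{i\in I_0}p_i$, which (since $q\ge0$) is minimized by taking $J_0=\mathcal{S}(I_0)$. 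Hence min-cut $\ge 1$ is equivalent to $\sum_{i\in I_0}p_i\ge\sum_{j\in\mathcal{S}(I_0)}q_j$ for every $I_0$, precisely the left-hand inequality, so the hypothesis delivers a flow of value $1$ and thus a feasible $T$.

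I expect the main obstacle to be the cut bookkeeping: correctly seeing that finiteness of a cut forces the closure condition $J_0\subseteq\mathcal{S}(I_0)$ and then optimizing the free set $J_0$. A secondary point worth recording is that the two displayed inequalities are not independent. Substituting $I\mapsto[N]\setminus I$ and noting $\mathcal{S}([N]\setminus I)=\{j:\mathcal{A}_j\cap I=\emptyset\}=[M]\setminus\mathcal{N}(I)$ shows the right-hand inequality for all $I$ is equivalent to the left-hand inequality for all $I$; thus the min-cut bound alone already certifies the full sandwiched condition. As an alternative to the hand-rolled flow argument, I could instead cite the Gale--Hoffman feasibility criterion for transportation problems with forbidden arcs, of which this is exactly the bipartite instance.
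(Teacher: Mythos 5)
Your proof is correct and takes essentially the same approach as the paper: the paper's own proof is exactly the bipartite-network, max-flow/min-cut argument you carry out, only stated as a brief sketch (and with source and sink roles mirrored, which is immaterial). You additionally supply the details the paper omits—the correspondence between feasible $T$ and flows of value $1$, the finite-cut bookkeeping forcing $J_0\subseteq\mathcal{S}(I_0)$, and the observation that $\mathcal{S}([N]\setminus I)=[M]\setminus\mathcal{N}(I)$ makes the two inequality families equivalent—all of which check out.
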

\begin{proof}[Sketch]
Construct a bipartite flow network with source $s$, clients $i\in[N]$ with supply $p_i$, availability nodes $j\in[M]$ with demand $q_j$, and sink $t$. Edges $s\to i$ and $j\to t$ have capacities $p_i$ and $q_j$, while $i\to j$ edges exist iff $i\in A_j$ with infinite capacity. By the max-flow/min-cut theorem feasibility holds iff \eqref{equation-feasibility} is satisfied~\cite{ford1956max, fulkerson1962flows}.
\end{proof}
\vspace{-5pt}
Theorem~\ref{thm:feasibility} is central to our framework. 
Despite its brief proof, it provides a complete characterization of feasibility for MOT via the subset inequalities~\eqref{equation-feasibility}, which are both necessary and sufficient. 
This condition generalizes Hall’s classical matching theorem~\cite{hall1935representatives} and the cut conditions of max-flow/min-cut~\cite{ford1956max,fulkerson1962flows}, and is recognized in modern optimal transport theory~\cite{villani2009optimal,peyre2019computational}. 

Given feasibility, $T$ can be computed by the \textit{iterative proportional fitting procedure} (IPFP)~\cite{sinkhorn1967concerning, knight2008sinkhorn, peyre2019computational}, also known as \textit{Sinkhorn scaling}, which alternates between row and column rescaling.  
\begin{algorithm}[t]
\caption{Sinkhorn Scaling (to find optimal plan $T$)}
\label{alg:solving-for-t}
\begin{algorithmic}[1]
\Require $p,q,\mathcal{E},\varepsilon$
\State \textbf{Init:} $T^{(0)}\!\ge0$, $\operatorname{supp}(T^{(0)})\subseteq\mathcal{E}$, and $T^{(0)\top}\ones=q$\footnote{$(T^{(0)})[i, S] = \frac{1}{|S|} \forall i \in S$}.
\For{$t=0,1,2,\ldots$}
  \State $r \gets p \oslash (T^{(t)}\ones)$; \quad $\tilde T \gets \operatorname{Diag}(r)\,T^{(t)}$
  \State $c \gets q \oslash (\tilde T^\top\ones)$; \quad $T^{(t+1)} \gets \tilde T\,\operatorname{Diag}(c)$
  \If{$\|T^{(t+1)}\ones - p\|_{1} \le \varepsilon$ and $\|\ones^\top T^{(t+1)} - q^\top\|_{1} \le \varepsilon$} \textbf{stop}
  \EndIf
\EndFor
\State \textbf{Output:} $T^{(t+1)}$, normalized weights $Y \gets T^{(t+1)}\,\mathrm{Diag}(q)^{-1}$
\end{algorithmic}
\end{algorithm}
\vspace{-5pt}
\begin{theorem}[\textbf{Convergence of IPFP \cite{sinkhorn1967concerning, knight2008sinkhorn}}]
\label{thm:convergence}
If $p$ and $q$ satisfy \eqref{equation-feasibility}, Algorithm~\ref{alg:solving-for-t} ($\varepsilon = 0$) converges to a solution of \eqref{MOT}.
\end{theorem}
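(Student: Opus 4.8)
The plan is to recast Algorithm~\ref{alg:solving-for-t} as \emph{alternating I-projections} (Bregman projections under the Kullback--Leibler divergence) onto two affine constraint sets, and then to invoke the convergence theory for such projections. Writing $D(A\,\|\,B)=\sum_{(i,j)\in\mathcal{E}}\big(A[i,j]\log\tfrac{A[i,j]}{B[i,j]}-A[i,j]+B[i,j]\big)$ for the generalized KL divergence restricted to the mask, let $\mathcal{R}=\{T\ge0:\operatorname{supp}(T)\subseteq\mathcal{E},\,T\mathbf{1}_M=p\}$ and $\mathcal{C}=\{T\ge0:\operatorname{supp}(T)\subseteq\mathcal{E},\,\mathbf{1}_N^\top T=q\}$. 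I would first verify that the row rescaling $\tilde T=\operatorname{Diag}(r)T^{(t)}$ is exactly the I-projection of $T^{(t)}$ onto $\mathcal{R}$ and the column rescaling $T^{(t+1)}=\tilde T\operatorname{Diag}(c)$ the I-projection onto $\mathcal{C}$; both multiply by strictly positive diagonals and hence preserve the support $\mathcal{E}$. Since Theorem~\ref{thm:feasibility} guarantees that \eqref{equation-feasibility} implies $\mathcal{R}\cap\mathcal{C}\neq\emptyset$, I fix a reference feasible plan $T^\ast\in\mathcal{R}\cap\mathcal{C}$ to serve as an anchor.

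Next I would use the generalized Pythagorean identity for I-projections onto affine families: for $T^\ast\in\mathcal{R}$ and the projection $\tilde T$ of $T^{(t)}$ onto $\mathcal{R}$, one has $D(T^\ast\,\|\,T^{(t)})=D(T^\ast\,\|\,\tilde T)+D(\tilde T\,\|\,T^{(t)})$, with the analogous relation for $\mathcal{C}$. Chaining the two half-steps shows that $t\mapsto D(T^\ast\,\|\,T^{(t)})$ is nonincreasing and therefore convergent, so the increments $D(\tilde T\,\|\,T^{(t)})$ and $D(T^{(t+1)}\,\|\,\tilde T)$ are summable and tend to $0$. Because $D(\tilde T\,\|\,T^{(t)})$ equals the generalized KL divergence between the current row marginal $T^{(t)}\mathbf{1}_M$ and its target $p$, a Pinsker-type bound forces $\|T^{(t)}\mathbf{1}_M-p\|_1\to0$, and symmetrically $\|\mathbf{1}_N^\top T^{(t)}-q\|_1\to0$. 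All iterates have unit total mass and support in $\mathcal{E}$, hence lie in a compact set; any limit point $\bar T$ meets both marginal constraints exactly with $\operatorname{supp}(\bar T)\subseteq\mathcal{E}$ and thus solves \eqref{MOT}. Finally, Csisz\'ar's uniqueness of the I-projection of $T^{(0)}$ onto $\mathcal{R}\cap\mathcal{C}$ upgrades this subsequential statement to convergence of the full sequence.

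The hard part will be the masked zero pattern, which is exactly what separates this from the classical Sinkhorn theorem that assumes a strictly positive (finite-cost) matrix. I must show the rescalings remain well defined---no division by a vanishing row or column sum---which amounts to every row $i$ with $p_i>0$ and every column $j$ with $q_j>0$ retaining positive mass on $\mathcal{E}$ throughout; \eqref{equation-feasibility} is precisely the rectangular, support-restricted analogue of the \emph{total support} condition guaranteeing that an admissible positive scaling exists. The subtle point is to rule out the limit $\bar T$ silently dropping support it needs: here I would argue that finiteness of $D(T^\ast\,\|\,T^{(t)})$ along the iteration prevents mass from collapsing on entries where $T^\ast$ is positive, so $\bar T$ realizes the marginals on the admissible mask rather than on a strict sub-pattern. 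Reconciling this support bookkeeping with the projection identities is the crux; once it is in place, the monotone-KL and compactness steps are routine.
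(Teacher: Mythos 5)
Your proof is correct, but it follows a genuinely different route from the paper's. The paper argues in the \emph{dual}: it invokes the entropy-regularized formulation of \eqref{eq:kantorovich} \cite{cuturi2013sinkhorn}, notes that its dual is (strictly) concave, and identifies the IPFP iterations of Algorithm~\ref{alg:solving-for-t} with block-coordinate ascent on that dual \cite{peyre2019computational}, from which monotone convergence follows. You argue entirely in the \emph{primal}, via Csisz\'ar's alternating I-projection framework \cite{csiszar1975ipr}: the two rescalings are KL projections onto the affine sets $\mathcal{R}$ and $\mathcal{C}$, the Pythagorean identity gives monotone decrease of $D(T^\ast\,\|\,T^{(t)})$ for any feasible anchor $T^\ast$ (whose existence is exactly Theorem~\ref{thm:feasibility}), summability of the increments plus Pinsker forces the marginals to converge, and compactness plus Csisz\'ar's uniqueness of the I-projection onto $\mathcal{R}\cap\mathcal{C}$ yields convergence of the full sequence. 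Each route buys something: the paper's dual view is concise and ties directly to the computational OT literature, but it is fragile precisely on the point you flag---in the masked setting, when every feasible plan has forced zeros on $\mathcal{E}$, the dual potentials can diverge and the claimed ``unique maximizer'' need not exist (the dual is also only strictly concave modulo an additive shift of the potentials), so the paper's sketch silently assumes a nondegeneracy it never states. Your primal argument never requires dual attainment; it only needs a feasible $T^\ast$ with $D(T^\ast\,\|\,T^{(0)})<\infty$, which holds because the initialization is strictly positive on all of $\mathcal{E}$, and your bookkeeping on which rows and columns retain positive mass (guaranteed by \eqref{equation-feasibility} applied to singletons, and by the existence of $T^\ast$ for columns) is exactly the well-definedness check the algorithm needs. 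In short, your proof is more robust where the masked structure actually bites, at the cost of being longer and requiring the I-projection machinery rather than a one-line appeal to dual coordinate ascent.
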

\vspace{-14pt}
\begin{proof}[Sketch]
Classical IPFP analysis~\cite{sinkhorn1967concerning, knight2008sinkhorn} applies.
We consider the entropy-regularized OT formulation~\cite{cuturi2013sinkhorn} that is strictly convex with a minimizer in the set of minimizers of \eqref{eq:kantorovich} by adding the the term $\sum_{i,j} T_{ij}\log{T_{ij}}$ to problem \eqref{eq:kantorovich}, whose dual is strictly concave with a unique maximizer\footnote{The regularization term is nothing but Shannon Entropy of the Transport map.}. IPFP corresponds to block-coordinate ascent on the dual~\cite{peyre2019computational}, yielding monotone convergence. 
\end{proof}
\vspace{-8pt}

Although conditions~\eqref{equation-feasibility} are necessary and sufficient, checking them explicitly can be computationally prohibitive. However, IPFP still converges to a unique minimizer of the corresponding entropy-regularized loss~\cite{cuturi2013sinkhorn}, guaranteeing that at least one marginal constraint (rows or columns) is exactly satisfied, while the other is projected to the closest feasible distribution in KL divergence~\cite{csiszar1975ipr}. A detailed characterization of this behavior will be provided in an journal version of this paper.

\vspace{-10pt}
\section{Convergence Analysis of \textbf{FedAVOT}}
\vspace{-7pt}
\label{sec:convergence}

\begin{algorithm}[t]
\caption{\textbf{FedAVOT}}
\label{alg:agnostic-fedavg}
\begin{algorithmic}[1]
\Require $\theta^{-1}\!=\!\mathbf{0}$, $S$, $H$, $\eta_\theta > 0$,  $\mathcal{C} \subset \Theta$; $p$, $q$, $\{\mathcal{A}_j\}$, $\mathcal{E}$, $\varepsilon >0$
\State Compute $(T,Y)$ via Alg.~\ref{alg:solving-for-t} for $(p,q,\mathcal{E}, \varepsilon)$
\For{$t=0,1,\ldots,\, SH-1$}
  \If{$t \bmod H = 0$} \Comment{\textbf{\textit{Global Communication}}}
    \State Observe active set $S^t\!\subseteq\![N]$ (let $j(t)$ be s.t. $\mathcal{A}_{j(t)} = S^t$)
    \State \textbf{aggregate} $\hat\theta^{\,t} \gets \sum_{i \in S^t} Y[i, j(t)]\,\theta_i^{\,t-1}$
    \State \textbf{broadcast} $\theta_i^{\,t} \gets \hat\theta^{\,t}$ for all $i\in [N]$
  \Else \Comment{\textbf{\textit{Local Updates}}}
    \State $\theta_i^{\,t} \gets \Pi_{\mathcal{C}}\!\left(\theta_i^{\,t-1} - \eta_\theta\, \nabla_\theta f_i(\theta_i^{\,t-1};\,\xi_i^{\,t})\right)$ \quad for all $i\in [N]$
  \EndIf
\EndFor
\Ensure $\frac{1}{S}\sum_{\tau=1}^{S} \hat\theta^{\,\tau H}$
\end{algorithmic}
\end{algorithm}

We establish convergence of \textbf{FedAVOT} (see Alg. \ref{alg:agnostic-fedavg}) under a nonsmooth convex and bounded variance setting. The algorithm involves two sources of randomness:  
(i) for each user $i \in [N]$, $\xi_i^t$ denotes a mini-batch of size $b$ sampled without replacement from $D_i$ at round $t$;  
(ii) $S^t \subseteq [N]$ is the random set of active users at round $t$, sampled \textit{iid}\ according to $q$~\cite{mcmahan2017communication,li2020federated}.  
In global rounds only $S^t$ matters, while in local rounds only $\xi_i^t$ is relevant. Let $\{\mathcal{F}_t\}_t$ be the filtration
\vspace{-3pt}
\[
\mathcal{F}_t := \sigma\!\left(\theta_i^s,\,\xi_i^s,\,S^s : s\le t,\, i\in[N]\right),
\]
capturing model states, mini-batch selections, and participation history. Then $\{\theta_i^t\}$ evolves as a Markov process with respect to $\{\mathcal{F}_t\}$. \vspace{-12.5pt}
\begin{assumption}[\textbf{Convexity}]
\label{assump:convex}
Each $f_i(\cdot):=f(\cdot;D_i)$ is convex.
\end{assumption}


\vspace{-6pt}
\begin{assumption}[\textbf{Bounded Gradient Norm}]
\label{assump:gradient_bound}
For all $i\in[N]$, it is true that
$
\sup_{\theta\in\mathcal{C}}\mathbb{E}_{\xi_i}\!\left[\|\nabla f(\theta;\xi_i)\|^2\right]\le G^2.
$
\end{assumption}
\vspace{-1pt}

We further tacitly assume $\mathcal{C}$ is convex compact so that projection $\Pi_{\mathcal{C}}(\cdot)$ is well defined, and that $\theta^*\in\mathcal{C}$ solves \eqref{equation:global-objective}. Lipschitz continuity of $f_i$ on $\mathcal{C}$ follows from convexity and compactness. The next result is crucial in deriving a convergence rate for \textbf{FedAVOT}. 

\begin{lemma}[\textbf{Sample-to-Model Inequality}]
\label{lem:sample-to-model}
At every global round $t$,
\[
\mathbb{E}\!\left[\|\hat{\theta}^t-\theta^*\|^2\mid\mathcal{F}_{t-1}\right]
\;\le\;
\sum_{i\in[N]}p_i\,\|\theta_i^{t-1}-\theta^*\|^2.
\]
\end{lemma}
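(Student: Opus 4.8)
The plan is to condition on $\mathcal{F}_{t-1}$, which fixes every local iterate $\theta_i^{t-1}$, so that the only remaining randomness at a global round is the active set $S^t=\mathcal{A}_{j(t)}$, drawn according to $q$ (the mini-batch noise $\xi_i^t$ is irrelevant in communication rounds). Writing the conditional expectation as a $q$-weighted average over the $M$ possible events and substituting the aggregation rule of Algorithm~\ref{alg:agnostic-fedavg} gives
\[
\mathbb{E}\!\left[\|\hat{\theta}^t-\theta^*\|^2\mid\mathcal{F}_{t-1}\right]
=\sum_{j=1}^M q_j\,\Big\|\sum_{i\in\mathcal{A}_j}Y[i,j]\,\theta_i^{t-1}-\theta^*\Big\|^2 .
\]

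First I would verify that, for each fixed event $j$, the weights $\{Y[i,j]\}_{i\in\mathcal{A}_j}$ form a probability vector. Since $Y=T\operatorname{Diag}(q)^{-1}$, nonnegativity (C4) gives $Y[i,j]=T[i,j]/q_j\ge0$, and the column constraint (C2) together with the masking constraint (C3) gives $\sum_{i\in\mathcal{A}_j}Y[i,j]=q_j^{-1}\sum_{i} T[i,j]=1$. This partition-of-unity lets me insert $\theta^*=\sum_{i\in\mathcal{A}_j}Y[i,j]\,\theta^*$ and pull the difference inside, so the inner term becomes $\big\|\sum_{i\in\mathcal{A}_j}Y[i,j]\,(\theta_i^{t-1}-\theta^*)\big\|^2$.

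The key step is then Jensen's inequality for the convex map $v\mapsto\|v\|^2$ against the probability vector $Y[\cdot,j]$:
\[
\Big\|\sum_{i\in\mathcal{A}_j}Y[i,j]\big(\theta_i^{t-1}-\theta^*\big)\Big\|^2
\;\le\;\sum_{i\in\mathcal{A}_j}Y[i,j]\,\|\theta_i^{t-1}-\theta^*\|^2 .
\]
Substituting this bound and using the identity $q_j\,Y[i,j]=T[i,j]$ collapses the double sum: swapping the order of summation and invoking the row constraint (C1), $\sum_j T[i,j]=p_i$ (with masking again letting the inner sum run over all $j$), yields $\sum_{i\in[N]}p_i\,\|\theta_i^{t-1}-\theta^*\|^2$, which is exactly the claimed bound.

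The only genuine inequality is the Jensen step; everything else is exact bookkeeping enforced by the marginals (C1)--(C4). The main point to be careful about is that the existence of a transport plan $T$ with these marginals---and hence of normalized weights $Y$ summing to one on each $\mathcal{A}_j$---is precisely what the feasibility conditions of Theorem~\ref{thm:feasibility} guarantee; without feasibility the partition-of-unity argument underlying the Jensen step would not go through, so this lemma is implicitly predicated on \eqref{equation-feasibility} holding.
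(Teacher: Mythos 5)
Your proof is correct and follows essentially the same route as the paper's: Jensen's inequality applied with the weights $Y[\cdot,j]$ as a probability vector on $\mathcal{A}_j$, followed by swapping the order of summation and invoking the row-marginal constraint $\sum_j T[i,j]=p_i$ to collapse the double sum to $\sum_i p_i\|\theta_i^{t-1}-\theta^*\|^2$. Your treatment is in fact slightly more careful than the paper's, since you explicitly verify the partition-of-unity property of $Y[\cdot,j]$ (which the paper's Jensen step uses implicitly) and flag that the lemma is predicated on the feasibility condition of Theorem~\ref{thm:feasibility}.
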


\begin{figure*}[!ht]
    \centering
    \begin{subfigure}{0.49\linewidth}
        \includegraphics[width=\linewidth,height=0.32\textheight,keepaspectratio,clip]{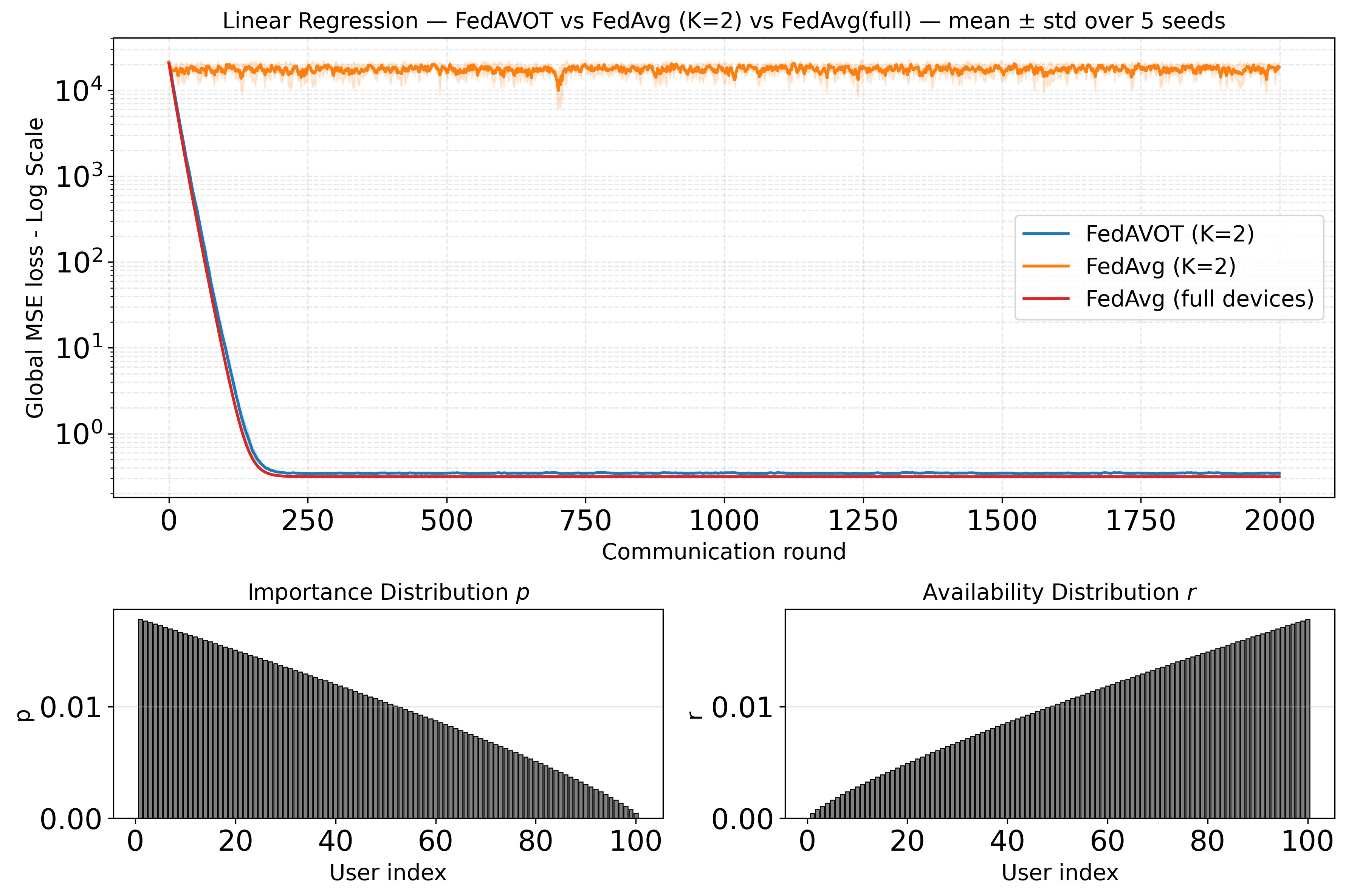}
        \vspace{-14pt}
        \caption{Linear regression under restricted availability (Log Scale).}
        \label{fig:linreg}
    \end{subfigure}
    \hfill
    \begin{subfigure}{0.49\linewidth}
        \includegraphics[width=\linewidth,height=0.32\textheight,keepaspectratio,clip]{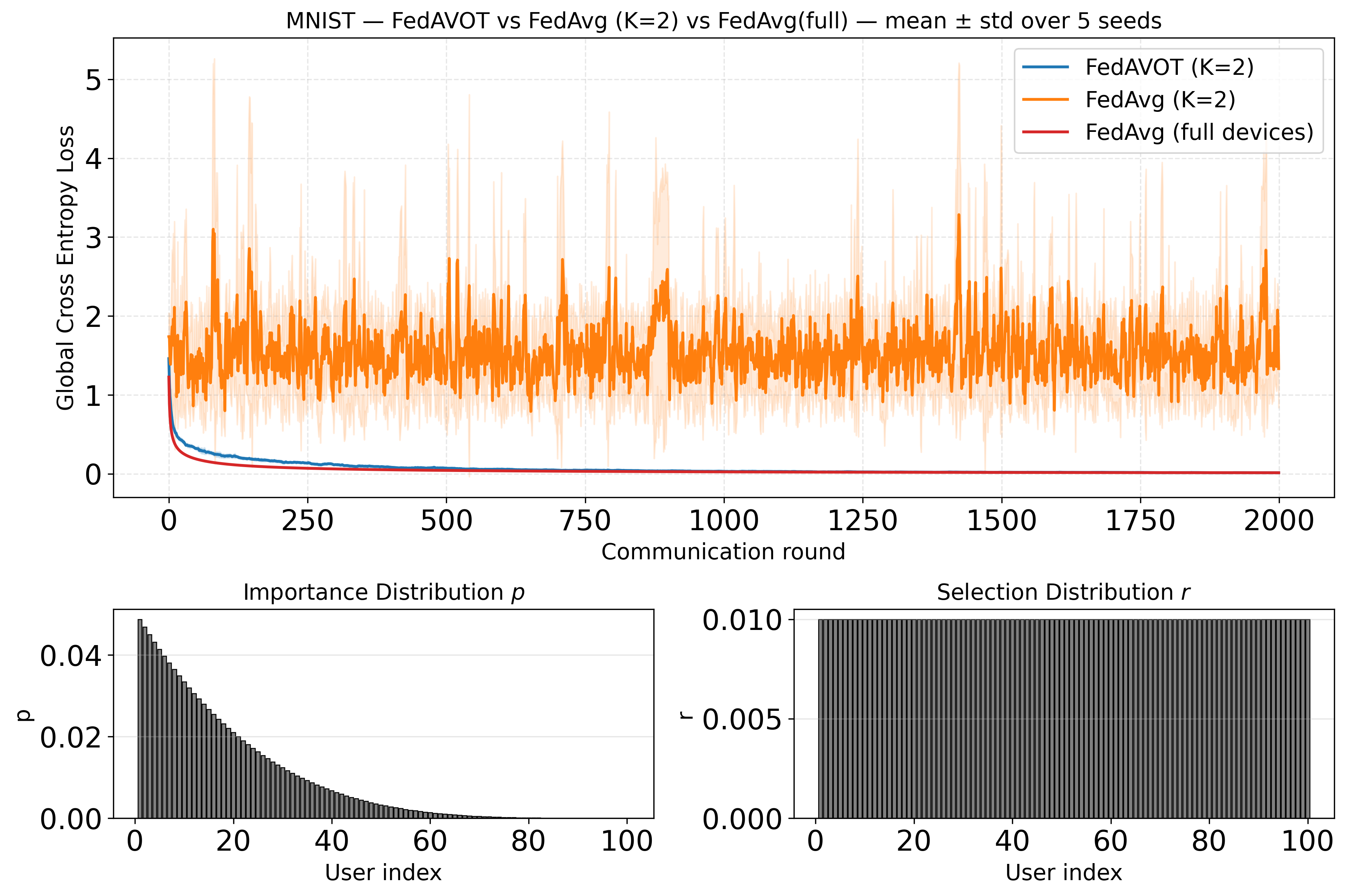}
        \vspace{-14pt}
        \caption{Multinomial logistic regression on MNIST (coordinated setting).}
        \label{fig:mnist}
    \end{subfigure}
    \vspace{-6pt}
    \caption{Comparison of \textbf{FedAVOT} and FedAvg baselines. Shaded regions show mean $\pm$ std.\ over $5$ seeds.}
    \vspace{-12pt}
    \label{fig:experiments}
\end{figure*}
\vspace{-12pt}
\begin{proof}
First, for $j(t)\equiv j(S^t)$ (see Alg. \ref{alg:agnostic-fedavg}), Jensen implies that
\begin{align}\nonumber
\mathbb{E} \hspace{-2pt} \left[\|\hat{\theta}^{t} - \theta^*\|^2\mid \mathcal{F}_{t-1}\right] &\hspace{-3pt}=\hspace{-1pt}  \mathbb{E}_{S^{t}}
\hspace{-2pt}\Bigg[ \Bigg\|  \sum_{i \in S^{t}} Y[i, j(t)]\theta_i^{t-1} - \theta^* \Bigg\|^2 \Bigg| \mathcal{F}_{t-1}\Bigg] \\ \nonumber
&\hspace{-2pt}\leq \hspace{-1pt}   \mathbb{E}_{S^{t}} 
\hspace{-2pt}\Bigg[  \sum_{i \in S^{t}} Y[i,j(t)] \|\theta_i^{t-1} - \theta^*\|^2 \Bigg| \mathcal{F}_{t-1}\Bigg]\hspace{-1pt}.
    \end{align}
Expanding the expectation on the right-hand side, we have
\begin{align}\label{GlobalToLocal}\tag{G.T.L.}
&\hspace{-6pt}\mathbb{E} \left[\|\hat{\theta}^{t} - \theta^*\|^2\mid \mathcal{F}_{t-1}\right] \\ \nonumber
&\leq \sum_{\mathcal{A}} \bigg[\mathbb{P}[S_{t}=\mathcal{A}] \cdot  \sum_{i \in [N]} Y[i,j(\mathcal{A})]\mathbb{I}[i\in \mathcal{A}] \|\theta_i^{t-1} - \theta^*\|^2\bigg]
 \\ \nonumber
&=  \sum_{i \in [N]} \left(\sum_{\mathcal{A}}Y[i, j(\mathcal{A})]\mathbb{P}[S_{t}=\mathcal{A}]\mathbb{I}[i \in \mathcal{A}]\right) \cdot \|\theta_i^{t-1} - \theta^*\|^2\\ \nonumber
&=  \sum_{i \in [N]} p_i \cdot  \|\theta_i^{t-1}  - \theta^*\|^2. 
\end{align}
and we are done.
\end{proof}
\vspace{-4pt}
\noindent The convergence rate of \textbf{FedAVOT} can now be established; the bulk of the analysis is omitted, but resembles \cite{rahimi2025agnosticfedavg}.
\begin{theorem}[\textbf{Convergence of \textbf{FedAVOT}}]
\label{thm:main}
Under Assumptions~\ref{assump:convex}--\ref{assump:gradient_bound} 
and the feasibility condition of Theorem~\ref{thm:feasibility} that ensures existence of a transport map, 
\textbf{FedAVOT} with stepsize $\eta=\Theta(1/\sqrt{TH})$ satisfies
\[
\boxed{
\mathbb{E}\!\left[f\!\left(\dfrac{1}{T}\sum_{t=1}^T \hat{\theta}_{tH}\right)-f(\theta^*)\right]
= \mathcal{O}\!\left(\dfrac{1}{\sqrt{T}}\right).
}
\]
\end{theorem}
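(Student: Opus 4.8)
The plan is to run a standard Lyapunov / potential-function argument for projected stochastic (sub)gradient descent in the nonsmooth convex regime, in which the only non-routine ingredient is the aggregation step, where Lemma~\ref{lem:sample-to-model} does all the work. I would take as potential the importance-weighted squared distance to the optimum, $V_t := \sum_{i\in[N]} p_i\,\|\theta_i^t-\theta^*\|^2$, and track its one-step conditional evolution under the two regimes of Algorithm~\ref{alg:agnostic-fedavg} separately.

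For a local round, each client performs a projected SGD step. Using nonexpansiveness of $\Pi_{\mathcal C}$, expanding the square, weighting by $p_i$, summing, and taking $\mathbb{E}[\cdot\mid\mathcal F_{t-1}]$, I would obtain
\[
\mathbb{E}[V_t\mid\mathcal F_{t-1}] \le V_{t-1} - 2\eta\sum_{i\in[N]} p_i\big\langle \nabla f_i(\theta_i^{t-1}),\,\theta_i^{t-1}-\theta^*\big\rangle + \eta^2 G^2,
\]
where the last term uses Assumption~\ref{assump:gradient_bound} together with unbiasedness of the minibatch gradient. Convexity (Assumption~\ref{assump:convex}) then lower-bounds the inner-product term by $\sum_i p_i\big(f_i(\theta_i^{t-1})-f_i(\theta^*)\big)$. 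For a global round I would simply invoke Lemma~\ref{lem:sample-to-model}: since after the broadcast every client holds $\hat\theta^t$ and $\sum_i p_i=1$, we have $V_t=\|\hat\theta^t-\theta^*\|^2$, hence $\mathbb{E}[V_t\mid\mathcal F_{t-1}]\le V_{t-1}$. This is the crucial point: the transport weights $Y$ make aggregation \emph{nonexpansive in the importance-weighted metric}, so communication rounds never inflate the potential --- exactly what fails for (weighted) FedAvg, whose aggregation only controls the surrogate $\tilde p$ of \eqref{eq:importance-implied-by-availability}.

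Summing the two recursions over all $TH$ rounds telescopes $V$: the global rounds contribute non-positively, leaving
\[
2\eta\sum_{t}\sum_{i} p_i\big(f_i(\theta_i^{t-1})-f_i(\theta^*)\big) \le V_0 + TH\,\eta^2 G^2.
\]
To turn the per-client gaps into a gap for the global objective $F$, I would introduce the virtual average $\bar\theta^{t}:=\sum_i p_i\theta_i^{t}$ and use Lipschitz continuity of each $f_i$ (from Assumption~\ref{assump:gradient_bound} on the compact $\mathcal C$) to write $\sum_i p_i f_i(\theta_i^{t-1}) \ge F(\bar\theta^{t-1}) - G\sum_i p_i\|\theta_i^{t-1}-\bar\theta^{t-1}\|$. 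Convexity of $F$ and Jensen's inequality then pull the averaging inside $F$, and choosing $\eta=\Theta(1/\sqrt{TH})$ balances the $V_0/(2\eta TH)$ and $\eta G^2$ contributions to yield the advertised $\mathcal O(1/\sqrt{T})$ rate, treating $H$ as constant.

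The main obstacle is the \emph{consensus drift} $\sum_i p_i\|\theta_i^{t-1}-\bar\theta^{t-1}\|$. Because clients resynchronize only every $H$ steps, their iterates diverge between communications; I would bound this drift by $O(\eta H G)$ using that every client starts an epoch from the common broadcast point and takes at most $H$ bounded-gradient steps, which feeds a term of order $\eta^2 H^2 G^2$ into the telescoped sum and stays $\mathcal O(1/\sqrt{T})$ for fixed $H$. The more delicate bookkeeping is reconciling the precise output $\tfrac1T\sum_t\hat\theta^{tH}$ with the telescoped sum of per-round suboptimalities, since $\hat\theta^{tH}$ equals $\bar\theta^{tH}$ only after the broadcast and is merely unbiased for $\bar\theta^{tH-1}$ beforehand; absorbing this gap via convexity and Jensen, rather than through the smoothness-based descent lemmas unavailable in the nonsmooth setting, is where the analysis demands the most care. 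Everything else follows the template of \cite{rahimi2025agnosticfedavg}.
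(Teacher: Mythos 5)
Your proposal is correct and takes essentially the same route as the paper: the paper's own argument consists precisely of Lemma~\ref{lem:sample-to-model} as the sole non-routine ingredient---which, exactly as you use it, makes the global aggregation step non-expansive in conditional expectation for the potential $V_t=\sum_{i\in[N]}p_i\|\theta_i^t-\theta^*\|^2$---with the remaining standard nonsmooth-convex projected-SGD telescoping omitted and deferred to the template of \cite{rahimi2025agnosticfedavg}, which you follow. One remark: the final bookkeeping you flag as delicate is cleaner than you suggest, since after the broadcast $\hat\theta^{\,\tau H}=\bar\theta^{\,\tau H}$ is the common starting point of the next local epoch, so $F(\hat\theta^{\,\tau H})-F(\theta^*)$ already appears as a (nonnegative) term in your telescoped sum; no unbiasedness-plus-Jensen step (which would run in the wrong inequality direction) is needed.
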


We note that the rate in Theorem~\ref{thm:main} is \textit{independent of the number of active users per round}. Thus, \textit{provided that the feasibility conditions of Theorem~\ref{thm:feasibility} hold}, \textbf{FedAVOT} achieves $\mathcal{O}(1/\sqrt{T})$ convergence even when each round involves as few as two participants (albeit with possibly larger variance). In infeasible regimes, IPFP converges to a KL-projected distribution $\tilde{p}$~\cite{csiszar1975ipr}, introducing a non-vanishing bias term (full analysis is deferred to the journal version).

Overall, \textbf{FedAVOT} yields an aggregation rule that simultaneously respects availability ($q$) while optimizing for importance ($p$). By framing aggregation as a MOT problem, \textbf{FedAVOT} inherits both theoretical guarantees and practical implementability. Notably, even when the number of active users per round is very small---as few as two---our analysis (and experiments; see below) confirm that \textbf{FedAVOT} retains the same convergence rate as classical FL methods~\cite{li2020federated, mohri2019agnostic}, providing strong communication efficiency and robustness guarantees under severe participation constraints. 
\vspace{-9pt}
\section{Experiments}
\vspace{-7pt}

We consider a FL setup with $N=100$ users, each holding a heterogeneous local dataset so that the overall system departs significantly from the \textit{iid} assumption. Heterogeneity is incorporated differently across \textit{two tasks}: \textit{Linear regression}, where each user receives samples from distinct feature distributions (Gaussian with user-specific mean and variance), and  \textit{MNIST classification}, where each user is assigned only a small subset of two digits, producing strong label skew. Two distinct settings are investigated. In the \emph{restricted availability} setting (regression), the global importance distribution $p$ is chosen proportional to $(-i)$, giving higher weight to users with smaller indices, while the \textit{availability prior} $r$ is proportional to $(i)$, favoring frequent selection of large-index users. Subsets of size $K=2$ are sampled without replacement from $r$, and the resulting distribution $q \in \Delta^{M-1}$ over $M=\binom{N}{2}$ pairs deviates sharply from $p$, creating a pronounced distribution shift. In the \emph{coordinated} (server-controlled) setting (MNIST classification), the importance weights are instead taken as $p_i \propto \exp(-i/10)$ to induce a heavy skew, while the availability is uniform across users. In this case, $q$ is uniform over $K=2$-subsets, representing the simplest and most generic server-controlled sampling protocol.

The key algorithms compared differ essentially in the aggregation step. Their update rules can be written as:

\vspace{-2ex}
\begin{equation}
\begin{aligned}\nonumber
    \text{FedAvg(full)}: \quad & \hat{\theta}^{(t)} = \sum_{i=1}^N p_i \, \theta_i^{(t-1)}, \\[-0.35em]
    \text{FedAvg($K$)}: \quad & \hat{\theta}^{(t)} = \sum_{i \in S^t} \tfrac{N}{K}\, p_i \, \theta_i^{(t-1)}, \\[-0.35em]
    \text{\textbf{FedAVOT}}: \quad & \hat{\theta}^{(t)} = \sum_{i \in S^t} Y[i,j(t)] \, \theta_i^{(t-1)}.
\end{aligned}
\end{equation}
FedAvg with full participation exactly matches the minimizer of the global objective, but requires communication from all users at every round. FedAvg with partial participation reduces communication by selecting only $K$ users and upscaling their contributions, which is unbiased under uniform availability but leads to amplitude distortion\footnote{One can see that $\mathbb{E}_{S^t}[\frac{N}{K}\sum_{i\in S^t}p_i]\neq1 $. For more details check \href{https://github.com/HerlockSholmesm/FedAVOT.git}{GitHub}.} and oscillatory behavior if $p \neq (1/N)\mathbf{1}_{N}$. \textbf{FedAVOT} overcomes this issue by solving for a transport plan $T$ such that $Tq = p$, ensuring the expected update respects the importance distribution, thus stabilizing convergence even under severe distribution shift.

The results are reported in Figure~\ref{fig:experiments}. In the restricted availability setting for linear regression (Fig.~\ref{fig:experiments}(a)), FedAvg with partial participation fails to reduce the global loss because the systematic bias between $p$ and $q$ overwhelms learning. \textbf{FedAVOT}, on the other hand, is able to reconcile the discrepancy and follows almost the same trajectory as full-participation FedAvg, despite using only two users per round. In the coordinated setting on MNIST (Fig.~\ref{fig:experiments}(b)), FedAvg with partial participation again fails, this time manifesting in oscillatory loss behavior caused by the amplitude distortion of the aggregated parameter vector. \textbf{FedAVOT} avoids this instability, converging smoothly and closely matching the performance of full participation.

The significance of our findings is that \textbf{FedAVOT} achieves nearly identical performance to full participation FedAvg, while drastically reducing communication: \textit{even $K=2$ active users at each round can be enough}. In availability-limited regimes, \textbf{FedAVOT} corrects sampling-induced bias, and in coordinated regimes, it removes scaling mismatch that destabilizes partial FedAvg. Thus, \textbf{FedAVOT} combines the statistical efficiency of full participation with the communication efficiency of partial selection. 

All experiments are averaged over five random seeds. In the linear regression tasks, heterogeneity is introduced by assigning users data from different underlying feature–label relations. In the MNIST tasks, label skew is induced by partitioning classes unevenly across users. 
Further implementation details, code, and scripts to reproduce the results can be found in our \href{https://github.com/HerlockSholmesm/FedAVOT.git}{GitHub repository}.

\vspace{-9pt}
\section{Conclusion}
\label{sec:conclusion}
\vspace{-7.5pt}

We have introduced \textbf{Federated AVeraging with Optimal Transport (\textbf{FedAVOT})}, a framework designed to explicitly align the availability and importance distributions in federated learning through a masked optimal transport formulation. The method admits efficient computation via iterative proportional fitting and retains the classical $\mathcal{O}(1/\sqrt{T})$ convergence rate under a nonsmooth convex setup, even at the presence of stringent partial participation with as few as two clients per global round. Empirical studies on linear regression and MNIST classification on hard heterogeneous tasks demonstrate that \textbf{FedAVOT} enhances stability, fairness, and performance relative to FedAvg when availability and importance distributions are even significantly misaligned.  These results establish \textbf{FedAVOT} as a robust and principled approach for communication-limited federated optimization, with future directions including extensions to non-convex models and analysis of relevant risk measures in this setting.
\clearpage
\bibliographystyle{IEEEtran}
\bibliography{main}

\begin{thebibliography}{10}
\providecommand{\url}[1]{#1}
\csname url@samestyle\endcsname
\providecommand{\newblock}{\relax}
\providecommand{\bibinfo}[2]{#2}
\providecommand{\BIBentrySTDinterwordspacing}{\spaceskip=0pt\relax}
\providecommand{\BIBentryALTinterwordstretchfactor}{4}
\providecommand{\BIBentryALTinterwordspacing}{\spaceskip=\fontdimen2\font plus
\BIBentryALTinterwordstretchfactor\fontdimen3\font minus \fontdimen4\font\relax}
\providecommand{\BIBforeignlanguage}[2]{{%
\expandafter\ifx\csname l@#1\endcsname\relax
\typeout{** WARNING: IEEEtran.bst: No hyphenation pattern has been}%
\typeout{** loaded for the language `#1'. Using the pattern for}%
\typeout{** the default language instead.}%
\else
\language=\csname l@#1\endcsname
\fi
#2}}
\providecommand{\BIBdecl}{\relax}
\BIBdecl

\bibitem{mcmahan2017communication}
H.~B. McMahan, E.~Moore, D.~Ramage, S.~Hampson, and B.~A. y~Arcas, ``Communication-efficient learning of deep networks from decentralized data,'' in \emph{Proceedings of AISTATS}, 2017.

\bibitem{kairouz2021advances}
P.~Kairouz, H.~B. McMahan, B.~Avent \emph{et~al.}, ``Advances and open problems in federated learning,'' \emph{Foundations and Trends in Machine Learning}, vol.~14, no. 1--2, pp. 1--210, 2021.

\bibitem{yang2019federated}
Q.~Yang, Y.~Liu, T.~Chen, and Y.~Tong, ``Federated learning,'' in \emph{Proceedings of IEEE TKDE}, 2019.

\bibitem{bonawitz2019towards}
K.~Bonawitz, H.~Eichner, W.~Grieskamp \emph{et~al.}, ``Towards federated learning at scale: System design,'' in \emph{Proceedings of MLSys}, 2019.

\bibitem{li2020federated}
T.~Li, A.~K. Sahu, A.~Talwalkar, and V.~Smith, ``Federated optimization in heterogeneous networks,'' in \emph{Proceedings of MLSys}, 2020.

\bibitem{zhao2018federated}
Y.~Zhao, M.~Li, L.~Lai \emph{et~al.}, ``Federated learning with non-iid data,'' in \emph{Proceedings of ICLR Workshop}, 2018.

\bibitem{chen2020optimal}
X.~Chen, J.~Sun, X.~Jin \emph{et~al.}, ``Optimal client sampling in federated learning,'' in \emph{Proceedings of NeurIPS}, 2020.

\bibitem{mohri2019agnostic}
M.~Mohri, G.~Sivek, and A.~T. Suresh, ``Agnostic federated learning,'' in \emph{Proceedings of ICML}, 2019.

\bibitem{li2019fair}
T.~Li, S.~Hu, A.~Beirami, and V.~Smith, ``Fair resource allocation in federated learning,'' in \emph{Proceedings of ICLR}, 2019.

\bibitem{rahimi2025agnosticfedavg}
S.~H. Rahimi and D.~Kalogerias, ``Convergence of agnostic federated averaging,'' \emph{arXiv preprint arXiv:2507.10325}, p.~5, 2025, 5 pages, 2 figures, CAMSAP conference.

\bibitem{dio2024restricted}
\BIBentryALTinterwordspacing
P.~Theodoropoulos, K.~E. Nikolakakis, and D.~Kalogerias, ``Federated learning under restricted user availability,'' in \emph{Proceedings of the IEEE International Conference on Acoustics, Speech and Signal Processing (ICASSP)}.\hskip 1em plus 0.5em minus 0.4em\relax IEEE, 2024. [Online]. Available: \url{https://ieeexplore.ieee.org/document/10445163}
\BIBentrySTDinterwordspacing

\bibitem{li2020convergence}
\BIBentryALTinterwordspacing
X.~Li, K.~Huang, W.~Yang, S.~Wang, and Z.~Zhang, ``On the convergence of fedavg on non-iid data,'' in \emph{International Conference on Learning Representations (ICLR)}, 2020. [Online]. Available: \url{https://openreview.net/forum?id=HJxNAnVtDS}
\BIBentrySTDinterwordspacing

\bibitem{karimireddy2020scaffold}
S.~P. Karimireddy, S.~Kale, M.~Mohri, S.~Reddi, S.~U. Stich, and A.~T. Suresh, ``Scaffold: Stochastic controlled averaging for federated learning,'' in \emph{Proceedings of the 37th International Conference on Machine Learning (ICML)}, 2020, pp. 5132--5143.

\bibitem{wang2020tackling}
J.~Wang, Z.~Charles, Z.~Xu, G.~Joshi, and B.~McMahan, ``Tackling the objective inconsistency problem in heterogeneous federated optimization,'' in \emph{Advances in Neural Information Processing Systems (NeurIPS)}, vol.~33, 2020, pp. 7611--7623.

\bibitem{cho2022towards}
Y.~J. Cho, D.~Kim, and K.~Shin, ``Towards understanding biased client selection in federated learning,'' in \emph{Proceedings of the 39th International Conference on Machine Learning (ICML)}, 2022, pp. 3947--3966.

\bibitem{du2021fairness}
W.~Du, J.~Chen, Y.~Liang, and Y.~He, ``Fairness in federated learning via core-set selection,'' in \emph{Proceedings of the 9th International Conference on Learning Representations (ICLR)}, 2021.

\bibitem{rizk2021importance}
M.~Rizk, S.~Vlaski, and A.~H. Sayed, ``Optimal importance sampling for federated learning,'' in \emph{ICASSP 2021 - IEEE International Conference on Acoustics, Speech and Signal Processing}.\hskip 1em plus 0.5em minus 0.4em\relax IEEE, 2021, pp. 3110--3114.

\bibitem{ribero2022intermittent}
M.~Ribero, H.~Vikalo, and G.~De~Veciana, ``Federated learning under intermittent client availability and time-varying communication constraints,'' \emph{arXiv preprint arXiv:2205.06730}, 2022.

\bibitem{wang2022fedgs}
Z.~Wang, P.~Zhao, Y.~Xu, S.~Sun, and J.~T. Zhou, ``Fedgs: Federated graph-based sampling with arbitrary client availability,'' \emph{arXiv preprint arXiv:2211.13975}, 2022.

\bibitem{condat2025stochasticmultiproximalmethodnonsmooth}
\BIBentryALTinterwordspacing
L.~Condat, E.~Gasanov, and P.~Richtárik, ``The stochastic multi-proximal method for nonsmooth optimization,'' 2025. [Online]. Available: \url{https://arxiv.org/abs/2505.12409}
\BIBentrySTDinterwordspacing

\bibitem{hall1935representatives}
P.~Hall, ``On representatives of subsets,'' \emph{Journal of the London Mathematical Society}, vol.~10, no.~1, pp. 26--30, 1935.

\bibitem{ford1956max}
L.~R. Ford and D.~R. Fulkerson, ``Maximal flow through a network,'' \emph{Canadian Journal of Mathematics}, vol.~8, pp. 399--404, 1956.

\bibitem{fulkerson1962flows}
------, \emph{Flows in Networks}.\hskip 1em plus 0.5em minus 0.4em\relax Princeton University Press, 1962.

\bibitem{villani2008optimal}
C.~Villani, \emph{Optimal Transport: Old and New}.\hskip 1em plus 0.5em minus 0.4em\relax Springer, 2008.

\bibitem{peyre2019computational}
G.~Peyr{\'e} and M.~Cuturi, \emph{Computational Optimal Transport}.\hskip 1em plus 0.5em minus 0.4em\relax MIT Press, 2019.

\bibitem{kantorovich1942transfer}
L.~V. Kantorovich, ``On the translocation of masses,'' \emph{Doklady Akademii Nauk}, vol.~37, no. 7--8, pp. 199--201, 1942.

\bibitem{villani2009optimal}
C.~Villani, \emph{Optimal Transport: Old and New}.\hskip 1em plus 0.5em minus 0.4em\relax Springer, 2009, vol. 338.

\bibitem{sinkhorn1967concerning}
R.~Sinkhorn, ``Concerning nonnegative matrices and doubly stochastic matrices,'' \emph{Pacific Journal of Mathematics}, vol.~21, no.~2, pp. 343--348, 1967.

\bibitem{knight2008sinkhorn}
P.~A. Knight, ``The sinkhorn--knopp algorithm: Convergence and applications,'' \emph{SIAM Journal on Matrix Analysis and Applications}, vol.~30, no.~1, pp. 261--275, 2008.

\bibitem{cuturi2013sinkhorn}
M.~Cuturi, ``Sinkhorn distances: Lightspeed computation of optimal transport,'' in \emph{Advances in Neural Information Processing Systems (NeurIPS)}, 2013.

\bibitem{csiszar1975ipr}
I.~Csisz{\'a}r, ``I-divergence geometry of probability distributions and minimization problems,'' \emph{The Annals of Probability}, vol.~3, no.~1, pp. 146--158, 1975.

\end{thebibliography}
\end{document}